\newcommand{\bm}[1]{\boldsymbol{#1}}
\newcommand{\bY}{\mathbf{Y}}
\newcommand{\by}{\mathbf{y}}
\newcommand{\bX}{\mathbf{X}}
\newcommand{\bx}{\mathbf{x}}
\newcommand{\bB}{\mathbf{B}}
\newcommand{\bE}{\mathbf{E}}
\newcommand{\bg}{\mathbf{g}}
\newcommand{\bh}{\mathbf{h}}
\newcommand{\bG}{\mathbf{G}}
\newcommand{\bH}{\mathbf{H}}
\def\bSig\mathbf{\Sigma}
\newcommand{\diag}{\mathop{\mathrm{diag}}}
\newcommand\Ldots{\mathop{\lower.01ex\hbox{$\hdots$}}}
\newtheorem{theorem}{Theorem}
\newtheorem{proposition}[theorem]{Proposition}
\newenvironment{proof}[1][Proof]{\begin{trivlist}
\item[\hskip \labelsep {\bfseries #1}]}{\end{trivlist}}
\newenvironment{example}[1][Example]{\begin{trivlist}
\item[\hskip \labelsep {\bfseries #1}]}{\end{trivlist}}
\newcommand{\qed}{\hfill \ensuremath{\Box}}
\def\bSig\mathbf{\Sigma}
\begin{document}

\title{Efficient Algorithm for Extremely Large\\
			 Multi-task Regression with Massive Structured Sparsity}
\author{Seunghak Lee$^{*}$  \\
Eric P. Xing$^{**}$ \\ 
School of Computer Science \\ 
Carnegie Mellon University, Pittsburgh, PA, U.S.A. \\ 
$^{*}$email: \texttt{seunghak@cs.cmu.edu}\\
$^{**}$email: \texttt{epxing@cs.cmu.edu} }

\maketitle

\begin{center}
\textbf{Abstract}
\end{center}
We develop a highly scalable optimization method called
``hierarchical group-thresholding'' 
for solving a multi-task regression
model with complex structured sparsity constraints on both input and output spaces. 
Despite the recent emergence of several efficient optimization algorithms for tackling complex  sparsity-inducing regularizers, true scalability in practical high-dimensional problems where a huge amount (e.g., millions) of sparsity patterns 
need to be enforced remains an open challenge, because all existing algorithms must deal with ALL such patterns exhaustively in every iteration, which is computationally prohibitive.    
Our proposed algorithm addresses the scalability problem 
by screening out multiple groups of coefficients simultaneously and systematically.
We employ a hierarchical tree representation of 
group constraints to accelerate the process of removing irrelevant constraints by taking advantage of the inclusion relationships between group sparsities, thereby avoiding dealing with all constraints in every optimization step, and necessitating optimization operation only on a small number of outstanding coefficients.
In our experiments, 
we demonstrate the efficiency of our method on simulation datasets, and in an application  
of detecting genetic variants associated with gene expression traits.

\vspace*{.3in}

\section{Introduction}

In this paper, we propose a very efficient
optimization 
technique for multi-task regression with structured sparsity.
We are interested in the optimization problem with the following general form: 
\begin{equation}
\label{equ:general}
\min_{\bB} \frac{1}{2}
\lVert \bY - \bB \bX  \rVert_F^2 
 + \lambda_1 \left| \bB \right| + \lambda_2 \Omega_{in}(\bB) + \lambda_3 \Omega_{out}(\bB)
\end{equation}
where $\bX \in \mathbb{R}^{J \times N}$ is the input data for  $J$ inputs and $N$ samples, 
$\bY \in \mathbb{R}^{K \times N}$ is the $K$ output data 
(equivalently $K$ tasks), and
$\bB \in \mathbb{R}^{K \times J}$ is the regression coefficient matrix.
Here $\Omega_{in}$ is an $\ell_1/\ell_2$ norm 
for inducing group sparsity among correlated inputs (grouping effects in the same rows of $\bB$) and
$\Omega_{out}$ is an $\ell_1/\ell_2$ norm 
for inducing group sparsity among correlated outputs (grouping effects in the same columns of $\bB$).
In this setting, it is possible that there exists 
overlap between/within input and output groups 
(i.e., a row group and a column group may intersect and hence overlap).
Note that this formulation subsumes popular special cases such as single task lasso, group lasso, etc..
However, throughout this paper, we use the  formulation in (\ref{equ:general}), 
as it explicitly presents a highly general regression problem, and
one can still use our algorithm for a single task regression problem
by setting $\lambda_3=0$ and $K=1$.

Unfortunately, problem (\ref{equ:general}) is non-trivial to optimize as
it poses two major challenges for large scale problems.
First, we need to be able to handle 
a large number of group sparsities efficiently.
For example, in eQTL mapping problems in bioinformatics, there exist a very large number of groups since
the number of input and output groups are 
proportional to $K$ (e.g., $2\times10^4$) and $J$ (e.g., $5\times10^5$), respectively.
Second, we need to deal with overlap of groups 
within 
and between
$\Omega_{in}$ and $\Omega_{out}$.
Note that a simple coordinate descent algorithm is not applicable
when $\Omega_{in}$ or $\Omega_{out}$ is non-separable.

The second challenge has been addressed by many 
optimization techniques including 
\cite{jenatton2009structured,jacob2009group,mairal2010network,chen2010efficient,yuanefficient,
mosci2010primal,qin2011structured,argyriou2011efficient,jenatton2011proximal,mairal2011convex, boyd2011distributed}.
For example, Jacob et al. \cite{jacob2009group} proposed to select
the union of overlapping groups as the support of sparse vectors. 
In their optimization procedure, input variables are duplicated to 
convert $\Omega_{in}$ with overlap into the norm with disjoint groups, and 
an optimization technique for group lasso \cite{meier2008group} is applied.
Jenatton et al. 
developed Structured-Lasso (SLasso) algorithm for sparsity-inducing norms
with overlapping groups \cite{jenatton2009structured}.
A smoothing proximal gradient method (SPG) \cite{chen2010efficient} is developed 
to efficiently deal with overlapping group lasso penalty and graph-guided fusion penalty.
Also, an efficient algorithm
based on alternating direction methods 
\cite{qin2011structured} was proposed for overlapping group lasso with both 
$\ell_1/\ell_2$ norm and $\ell_1/\ell_\infty$ norm.  
Recently, fast overlapping group lasso (FoGLasso) \cite{yuanefficient}
was proposed for fast optimization of overlapping group lasso problem
based on accelerated gradient descent method and a proximal operator.

However, the first  challenge is a scalability problem when there exist a very large
number of (overlapping) groups, and it has been relatively less studied in previous works. 
For example, the time complexity of 
smoothing proximal gradient method (SPG) \cite{chen2010efficient} 
is $O(\sum_{\bg_m \in \mathcal{G}} |\bg_m|)$, where 
$\mathcal{G}$ is a set of groups, and
a primal-dual algorithm for overlapping group lasso \cite{mosci2010primal}
has time complexity of $O(|\hat{\mathcal{G}}^3|)$, 
where $\hat{\mathcal{G}}$ is the set of active groups
(groups having non-zero elements).
At each iteration of SLasso algorithm \cite{jenatton2009structured},
there is an expensive matrix inversion operation, and
the inner loop of Picard-Nesterov method \cite{argyriou2011efficient} 
and FoGLasso \cite{yuanefficient} have the time complexity of $O(J|\mathcal{G}|)$.
As the number of groups in large-scale problems can be very large (e.g. $10^6$), the 
scalability of existing algorithms could be severely affected by a large number of groups.
Thus, there is an urgent need to develop an 
algorithm highly scalable to the number of groups, and
in this paper, we present a highly efficient algorithm 
given a very large number of (overlapping) groups.
Figure \ref{fig:speed_fullcomp} illustrates the efficiency of our method
in comparison to other competitors including FoGLasso, SPG, and SLasso.

We present a simple and efficient algorithm
called hierarchical group-thresholding method (HiGT) to
address the scalability problem for overlapping group lasso.
We use the following optimization strategy. 
First, we screen a large number of zero groups  simultaneously  
by testing the zero condition of multiple groups.
We further improved the speed of this step by employing
a tree data structure where nodes represent
the zero patterns encoded by $\Omega_{in}$ and $\Omega_{out}$ 
at different granularity, and edges indicate the inclusion relations among them.
Using the tree data structure, we can avoid checking a large number of zero groups.
Second, given a small number of nonzero groups of coefficients from the previous step,
we solve our problem using an efficient method for overlapping group lasso.  
We used FoGLasso for the second step.
It is also noteworthy that the accuracy of our screening step is not affected by 
the number of overlapping groups as it relies on exact optimality conditions
of zero groups. Unlike our method, a large number of overlapping groups 
can degrade the accuracy of some approximation approaches
(see Figure \ref{fig:speed_fullcomp}(b)).

In our experiments, we first evaluate the efficiency of
the first step (screening step). Then, 
we demonstrate the performance of our method in terms of 
the speed and the accuracy for the recovery of structured sparsity 
via simulation study, in comparison to three state-of-the-art methods. 
As an example of biological analysis, we report a novel and significant 
SNP pair identified by our method, and present discussions. 

\paragraph{Remark}
The problem (\ref{equ:general}) is originally motivated 
by expression quantitative trait loci (eQTLs) mapping in computational biology.
Here eQTLs refer to the genomic locations or single nucleotide polymorphisms 
(SNPs) associated with gene expressions.
In eQTL mapping problems, it is 
believed that many inputs (i.e., SNPs) impose small or medium effects on outputs (i.e., expression traits), and we usually have $J>>N$ ($J\sim 10^6, N\sim 10^3$) which exacerbate the noise to signal ratio. 
Thus, it is desirable to explore the groups of inputs  to 
increase effective signal strength (individual inputs have 
too small effects to be detected) for more accurate causal SNP identification.  
It is also desirable to perform multi-task learning by jointly considering multiple (possibly correlated) responses to decrease the sample size required for successful support recovery
\cite{negahban2011simultaneous} (the number of samples is too small to detect small signals). 
Thus, to take advantage of both input groups $\Omega_{in}$ 
and output groups $\Omega_{out}$ simultaneously,
we are interested in solving problem (\ref{equ:general}).

\paragraph{Notations}
Given a matrix $\mathbf{B} \in \mathbb{R}^{K \times J}$, we denote
the $k$-th row by $\bm{\beta}_k$, the $j$-th column
by $\bm{\beta}^j$, and the $(k,j)$ element 
by $\beta_k^j$. 
Given the set of groups $\mathcal{G}=\{{\bg_m}_1, \ldots, {\bg_m}_{|\mathcal{G}|}\}$ 
defined as a subset of the power set of $\{1, \ldots, J\}$,
$\bm{\beta}_k^{\bg_m}$ represents the row vector with
elements $\{\beta_k^j: j \in \bg_m, \; \bg_m \in \mathcal{G} \}$.
Similarly, for the set of groups $\mathcal{H}=\{{\bh}_1, \ldots, {\bh}_{|\mathcal{H}|}\}$ 
over $K$ rows of matrix $\mathbf{B}$, 
we denote by $\bm{\beta}_{\bh_o}^{j}$ the column vector with
elements $\{\beta_k^j: k \in \bh_o, \; \bh_o \in \mathcal{H} \}$. 
We also define the submatrix of $\mathbf{B}_{\bh_o}^{\bg_m}$ as 
a $|\bh_o| \times |\bg_m|$ matrix with elements $\{\beta_k^j: k \in \bh_o, \; j \in \bg_m, \;
\bh_o \in \mathcal{H}, \; \bg_m \in \mathcal{G} \}$.

\section{Multi-task Regression with Structured Sparsity}

We use a linear
model parametrized by unknown regression coefficients $\bB \in \mathbb{R}^{K \times J}$:
$\bY = \bB \bX + \bE$, 
where $\bE \in \mathbb{R}^{K \times N}$ 
is i.i.d. Gaussian noise with 
zero mean and the identity covariance matrix.
Throughout the paper, we assume that $x_j^{i}$s and $y_k^i$s 
are standardized,
and consider a model without an intercept.

Suppose that we are given a set of input groups $\mathcal{G}$ and
a set of output groups $\mathcal{H}$.  
We consider a multi-task regression model with  structured sparsity:
\begin{align}
\label{equ:reg5}
\min_{\bB} \frac{1}{2}
\lVert \bY - \bB \bX  \rVert_F^2 
+ \lambda_1   
\lVert  \diag{(\bm{w}^T \bB)} \rVert_1
+ \lambda_2
\sum_{k=1}^K \sum_{{\bg_m} \in \mathcal{G}} \rho_{t} \lVert\bm{\beta}_k^{\bg_m} \rVert_{2}
+ \lambda_3 
\sum_{j=1}^J \sum_{{\bh_o} \in \mathcal{H}} \nu_{o} \lVert \bm{\beta}_{\bh_o}^j \rVert_{2},
\end{align}
where $\bg_m \in \mathcal{G}$ is the $m$th group of inputs, 
$\bh_o \in \mathcal{H}$ is the $o$th group of outputs, 
$\lVert \bm{\beta}_k^{\bg_m} \rVert_{2} 
= 
\sqrt{\sum_{j \in \bg_m} (\beta_k^j)^{2} }$,
and 
$\lVert \bm{\beta}_{\bh_o}^k \rVert_{2} 
= 
\sqrt{\sum_{k \in \bh_o} (\beta_k^j)^{2} }$.
Here individual or groups of coefficients are differently penalized with
weights $\bm{w} \in \mathbb{R}^{K \times J}$, $\bm{\rho} \in \mathbb{R}^{|\mathcal{G}|}$ 
and $\bm{\nu} \in \mathbb{R}^{|\mathcal{H}|}$.
There may exist overlap between 
groups in $\mathcal{G}$ and groups in $\mathcal{H}$, 
and within groups in $\mathcal{G}$ or $\mathcal{H}$.
Note that $\bB$ will have zero patterns which are the union 
of groups in $\mathcal{G}$ and $\mathcal{H}$ and 
individual coefficients. 
The supports of $\bB$ (nonzero $\beta_k^j$'s)
will be the complement of zero patterns.
As the contribution of this paper is to propose an efficient optimization 
method,  for simplicity,
we assume that all weights are set to 1.
{\begin{example}
\label{subsec:example_struct_io}
We illustrate an example of the penalty used for problem (\ref{equ:reg5}).
Suppose we have 
two inputs and outputs,
$\{{\bx}_1, {\bx}_2\}$, 
$\{{\by}_1, {\by}_2\}$,
and 
$\bB$ which includes
$\{\beta_1^1, \beta_1^2,\beta_2^1,\beta_2^2\}$.
For the input and output groups, we have $\mathcal{G} = \{\bg_1\}$,
$\bg_1 = \{1,2\}$, $\mathcal{H} = \{\bh_1\}$ and
$\bh_1 = \{1,2\}$.
Under this setting, the penalty for
problem (\ref{equ:reg5}) 
is given by

\label{equ:example}
\begin{align}
\Omega(\bB) =  \lambda_1 \sum_{k=1}^2\sum_{j=1}^2 |\beta_k^j|
+ \lambda_2 \sum_{k=1}^2\sqrt{\sum_{j=1}^2(\beta_k^j)^2}
+ \lambda_3 \sum_{j=1}^2\sqrt{\sum_{k=1}^2(\beta_k^j)^2}.
\end{align}
\end{example}}

\section{Hierarchical Group-Thresholding}
\label{sec:hgroup_thres}

In this section, we propose an efficient method to optimize problem (\ref{equ:reg5})
referred to as Hierarchical Group-Thresholding (HiGT). 
Our algorithm consists of two steps.
First, We identify zero groups by checking optimality conditions (called thresholding) 
as we walk through a predefined hierarchical tree. 
After walking though the nodes in the tree, some groups of coefficients might not achieve zero. 
Second, we optimize problem (\ref{equ:reg5}) with only these groups of non-zero $\beta_k^j$'s 
using an efficient  optimization technique available for overlapping group lasso.

Let us 
characterize the zero patterns  induced by 
$\ell_1/\ell_2$ norms in problem (\ref{equ:reg5}). 
We first consider a block of $\bB_{\bh_o}^{\bg_m}$ which consists of 
one input group $(\bg_m \in \mathcal{G})$ and one output group $(\bh_o \in \mathcal{H})$.
Since each group can be zero simultaneously
($\bm{\beta}_k^{\bg_m}=\bm{0}$, 
$\bm{\beta}_{\bh_o}^{j}=\bm{0}$),
there exist zero patterns for $\bB_{{\bh_o}}^{{\bg_m}} = \bm{0}$
when $\bm{\beta}_k^{\bg_m} = \bm{0}, \; \forall k\in \bh_o$
or $\bm{\beta}_{\bh_o}^{j} = \bm{0}, \; \forall j \in \bg_m$. 
Furthermore, the union of multiple $\bB_{\bh_o}^{\bg_m}$'s
can generate zero patterns for $\bB_{\bH}^{\bG} = \bf{0}$ 
which consists of multiple input groups and multiple output groups, 
$\{\bh_o\} \in \bH$ and $\{\bg_m\} \in \bG$.
One might be able to check 
these zero patterns 
by checking optimality conditions for each $\bm{\beta}_k^{\bg_m}=\bf{0}$ and $\bm{\beta}_{\bh_o}^j=\bf{0}$.
However, this approach may be inefficient as it 
needs to examine a large number of groups. 
Instead, to efficiently check the zero patterns,
we will test multiple groups simultaneously (i.e., all groups in $\bB_{\bh_o}^{\bg_m}$ or $\bB_{\bH}^{\bG}$).
Also, we will construct a hierarchical tree, and exploit the inclusion relations
between the zero patterns
so that we can identify zero groups efficiently 
by traversing the tree while avoiding unnecessary optimality checks.

\begin{figure}[t] 
\psfrag{h1}{{\tiny $\bh_1$}}
\psfrag{h2}{{\tiny $\bh_2$}}
\psfrag{g1}{{\tiny $\bg_1$}}
\psfrag{g2}{{\tiny $\bg_2$}}
\centering
\includegraphics[width=0.45\textwidth]{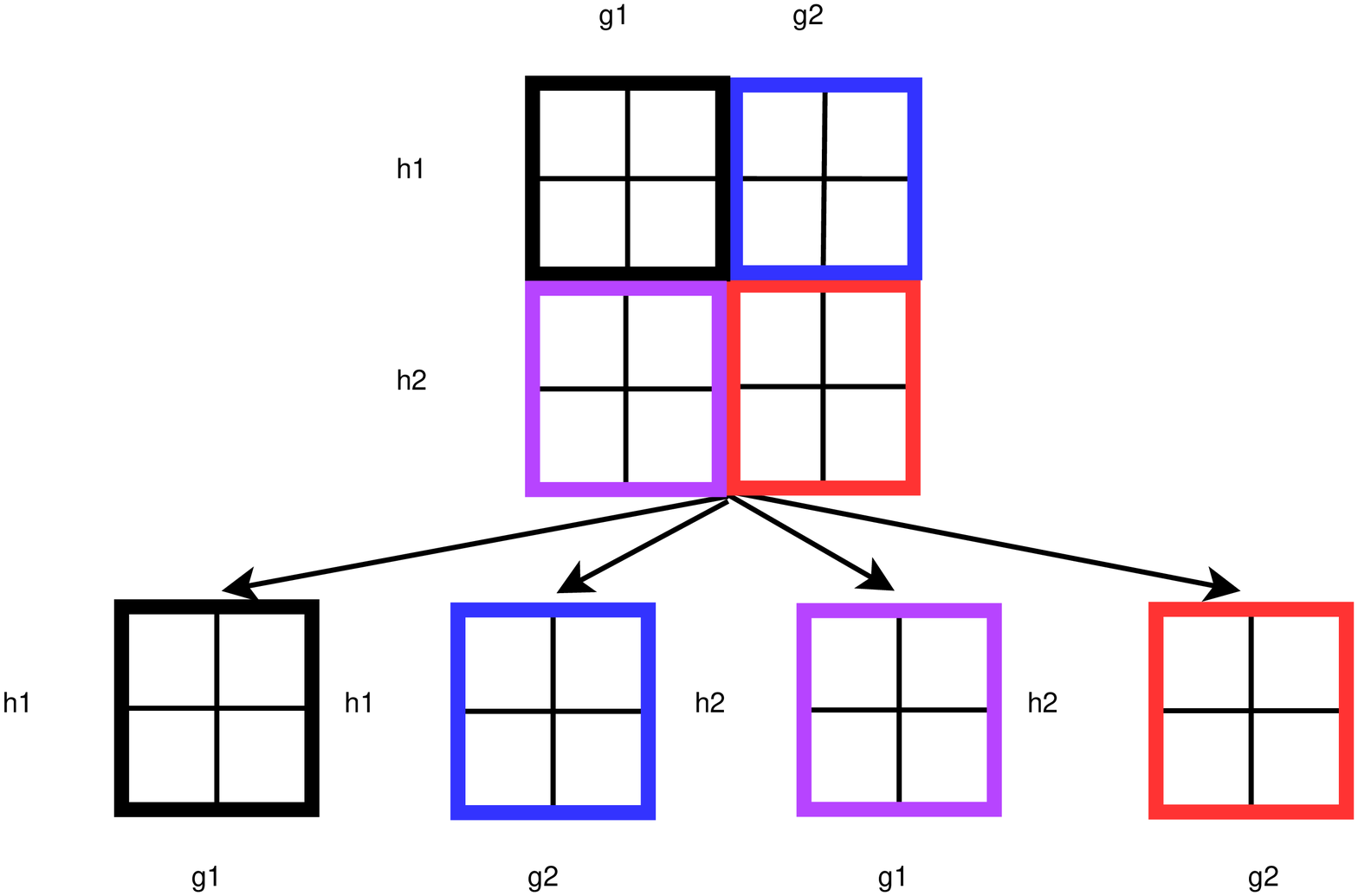}
\caption{An example of a tree that 
contains 
$\bB_{\bH}^{\bG}$,
where $\bH = \{\bh_1, \bh_2\}$, and $\bG = \{\bg_1, \bg_2\}$.
The root node contains zero pattern for $\bB_{\bH}^{\bG}=\bm{0}$,
and the leaf nodes represent the zero patterns for ${\bB}_{\bh_o}^{\bg_m}=\bm{0}$.
}
\label{fig:groupThres}
\end{figure}

In Figure \ref{fig:groupThres}, we show an example of the tree for
$\bB_{\bH}^{\bG}$ when $\bH = \{\bh_1, \bh_2\}$, $\bG = \{\bg_1, \bg_2\}$, and
$|\bg_1| = |\bg_2| = |\bh_1| = |\bh_2| = 2$.  
We denote the set of zero patterns of $\bB$ (i.e.,
$\bB_{\bh_o}^{\bg_m}$'s or $\bB_{\bH}^{\bG}$'s) 
by $\mathcal{Z} = \{Z_1,\ldots,Z_{|\mathcal{Z}|}\}$.
For example, $Z_1$ can be a zero pattern for 
$\bB_{\bH}^{\bG} = \bm{0}$ (the root node in Figure \ref{fig:groupThres}).
Let us denote $\bB(Z_t)$ by the coefficients of $\bB$ corresponding to $Z_t$'s zero pattern.
Then we define a tree as follows.
A node is represented by $Z \in \mathcal{Z}$, 
and there exists a directed edge from 
$Z_1 \in \mathcal{Z}$ to $Z_2 \in \mathcal{Z}$ if and only if $Z_1 \supset Z_2$
and $\nexists Z \in \mathcal{Z}: Z_1 \supset Z \supset Z_2$.
Note that each layer encodes different granularities of
sparsity pattern. 
When we have multiple $\bB_{{\bH}}^{{\bG}}$'s, 
we can generate a subtree for each $\bB_{{\bH}}^{{\bG}}$ separately, 
and then connect all the subtrees to the dummy root node for $\bB = \bm{0}$.

We can observe that our procedure has the following properties.
First, by testing zero conditions for each node, we can identify multiple zero groups 
simultaneously. 
Second, walking through the tree, if $\bB(Z_t) = \bm{0}$, we know that 
all the descendants of $Z_t$ are also zero due to
the inclusion relations of the tree. Hence, we can skip
to check the optimality conditions that the descendants of 
$Z_t$ are zero.

Considering these properties,
we develop our optimization method for the following reasons.
First, if $\bB$ is sparse,
our method is very efficient since 
we can skip optimality checks for many zero patterns in $\mathcal{Z}$. 
Mostly we will check only nodes
located at the high levels of the tree.
Second, our method is simple to implement. All we need
is to check whether each node in the tree attains zero. 
After identifying zero groups, we solve problem (\ref{equ:reg5}) with a small number of non-zero groups of 
coefficients using an available optimization technique. 

Specifically, our hierarchical group-thresholding method has the following procedure:

\begin{enumerate}
\item Construct a tree that contains the groups of zero patterns of $\bB$ 
(i.e., $\bB_{\bh_o}^{\bg_m}$ and $\bB_{\bH}^{\bG}$). 
In our experiments, we used two input and two output groups for each $\bB_{\bH}^{\bG}$, i.e.,
$|\bH|=|\bG|=2$.
\item 
\label{algo:iter1}
Use depth-first-search (DFS) to traverse the tree, and check optimality conditions
to see if the zero patterns at each node $Z$ achieve zero.
If $Z$ satisfies the optimality condition to be zero, 
skip the descendants of $Z$, and visit the next node according to the DFS order.
\item 
\label{algo:iter2}
With the groups of $\beta_k^j$s which did not achieve zero in the previous step, 
we solve problem (\ref{equ:reg5}) 
using an available optimization algorithm for overlapping group lasso. 
We used FoGLasso \cite{yuanefficient} for this step. 
\end{enumerate}
In the next section, we show two main
ingredients of our optimization method that include
1) the construction of a  hierarchical  tree, and
2) the optimality condition of each $Z \in \mathcal{Z}$ in the tree.

\begin{algorithm}[ht]
\caption{Hierarchical Group Thresholding (HiGT) algorithm} 
{\footnotesize 
\begin{algorithmic}
\label{alg:hGroupThres}
\STATE $\mathcal{G} \leftarrow \mbox{groups of inputs}; \mathcal{H} \leftarrow \mbox{groups of outputs}$
\STATE $T(\mathcal{Z},\mathcal{E}) \leftarrow \mbox{a hierarchical tree with groups of zero patterns 
(see Section \ref{subsec:DAG}})$
\STATE $\{Z_{(1)},Z_{(2)},\ldots,Z_{(|\mathcal{Z}|)}\}  \leftarrow \mbox{DFS order of $\mathcal{Z}$ in $T(\mathcal{Z},\mathcal{E})$}$
\STATE
\STATE {\bf(1. Screening Step)}
\STATE $V \leftarrow \emptyset$
\STATE $t \leftarrow 1$
\WHILE{$t\leq |\mathcal{Z}|$}  
\IF{$Z_{(t)}$ corresponds to $\bB_{\bH}^{\bG}=\bm{0}$}  
\STATE p $\leftarrow$ Rule in (\ref{eq:bscreen})
\ELSIF{$Z_{(t)}$  corresponds to $\bm{\bB}_{\bh_o}^{\bg_m}=\bm{0}$} 
\STATE p $\leftarrow$ Rule in Proposition \ref{equ:prop1}
\ELSE 
\STATE $t \leftarrow t+1$; continue; (Skip dummy root node)
\ENDIF
\IF{p holds (condition for $\bB(Z_{(t)}) = \bm{0}$)} 
\STATE $t \leftarrow $ DFS order of $t'$ such that 
$Z_{(t')}$ is not a descendant of $Z_{(t)}$, $t'>t$
and $\nexists t{''}: t'>t{''}>t$\\
(Skip the descendants of $Z_{(t)}$)
\ELSIF{p $=$ Rule in Proposition \ref{equ:prop1}}
\STATE $V \leftarrow V \cup $ groups in $Z_{(t)}$
(Keep the groups in $Z_{(t)}$) 
\STATE $t \leftarrow t+1$
\ELSE
\STATE $t \leftarrow t+1$
\ENDIF
\ENDWHILE
\STATE
\STATE {\bf (2. Updating Step)}
\STATE With the coefficients in $V$ and
their  corresponding  groupings in $\mathcal{G}$ and $\mathcal{H}$,
we optimize problem (\ref{equ:reg5}) using an efficient optimization  technique
for overlapping group lasso (We used FoGLasso \cite{yuanefficient} for this step).
\end{algorithmic}
}
\end{algorithm}

\subsection{Construction of Hierarchical Tree}
\label{subsec:DAG}

Here we consider each $\bB_{\bH}^{\bG}$ separately.
We first generate a tree for each  $\bB_{\bH}^{\bG}$, and then combine them to 
make a single tree. 
In each block of $\bB_{\bH}^{\bG}$, 
we examine the zero patterns of $\bB_{{\bh}_{o}}^{{\bg}_{m}}$, 
which are included in $\bB_{\bH}^{\bG}$, $\{\bg_m\}\in \bG, \{\bh_o\} \in \bH$.
These zero patterns of $\bB_{{\bh}_{o}}^{{\bg}_{m}}$ are shown in the leaf nodes in Figure \ref{fig:groupThres}. 
Note that even though we present a two-level tree throughout this paper, one can 
design a tree with multiple levels. 
Then we need to determine the edges of the tree by 
investigating the relations of the nodes.
Given  their relations between $\bB_{\bH}^{\bG}$ and $\bB_{{\bh}_{o}}^{{\bg}_{m}}$ 
(i.e., $\bB_{\bH}^{\bG} \supset \bB_{{\bh}_{o}}^{{\bg}_{m}}$), 
we create a directed
edge $Z_1 \rightarrow Z_2$. 
Finally, we make a dummy root node and generate an edge from the
dummy node to the roots of all subtrees for $\bB_{\bH}^{\bG} = \bf{0}$.

\subsection{Screening Rules for Multiple Groups} 
\label{subsec:detail_opt}

We present rules for checking zero conditions of each node in the tree.
We start with optimality condition for problem (\ref{equ:reg5}) by computing 
a subgradient of its objective function with respect to $\beta_k^j$ and set it to zero:
\begin{align}
\label{equ:kkt}
\left(\mathbf{y}_k- \bm{\beta}_k \bX\right) (\mathbf{x}_j)^T = 
\lambda_1 s_k^j + \lambda_2 c_k^j + \lambda_3 d_k^j,
\end{align}
where $s_k^j$, $c_k^j$ and $d_k^j$ are 
a subgradient of penalties in problem (\ref{equ:reg5})
with respect to $\beta_k^j$.

We first show a rule for identifying ${\bB}_{\bh_o}^{\bg_m} = \bf{0}$
which includes $|\bh_o|$ output groups and 
$|\bg_m|$ input groups of coefficients.
We assume that our algorithm starts with $\bB = \bf{0}$, and 
set $\bm{\beta}_k \bX = \bf{0}$, $\forall k$. 
Under the assumption, we can test ${\bB}_{\bh_o}^{\bg_m}=\bf{0}$ separately using Eq. (\ref{equ:kkt}).

\begin{proposition}
\label{equ:prop1}
${\bB}_{\bh_o}^{\bg_m} = \bm{0}$ if 
$\sum_{k \in \bh_o} \sum_{j \in \bg_m }  
\left| {\by}_k (\mathbf{x}_j)^T  - \lambda_1 s_k^j \right|
\leq \left|\lambda_2 \sqrt{|\bh_o|} - \lambda_3 \sqrt{|\bg_m|}\right|$ where 
\[s_k^j = \left\{ 
\begin{array}{l l}
  \frac{{\by}_k (\mathbf{x}_j)^T  }{\lambda_1} 
  & \mbox{if $\left|{\by}_k (\mathbf{x}_j)^T \right| \leq \lambda_1$}\\ 
	sign\left( \by_k (\mathbf{x}_j)^T \right) 
  & \mbox{if $\left| \by_k (\mathbf{x}_j)^T \right| > \lambda_1$}.\\  
  \end{array} \right.\]
\end{proposition}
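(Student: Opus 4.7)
The plan is to show that setting $\bB_{\bh_o}^{\bg_m}=\bm{0}$ verifies the first-order optimality condition (\ref{equ:kkt}) of problem (\ref{equ:reg5}) entrywise on the block. Under the starting assumption $\bm{\beta}_k\bX=\bm{0}$, condition (\ref{equ:kkt}) becomes $\by_k(\bx_j)^T=\lambda_1 s_k^j+\lambda_2 c_k^j+\lambda_3 d_k^j$ for every $(k,j)$ with $k\in\bh_o,\ j\in\bg_m$, where $s_k^j\in[-1,1]$ is a subgradient of $|\cdot|$ at $0$, and $(c_k^j)_{j\in\bg_m}$ and $(d_k^j)_{k\in\bh_o}$ are subgradients of the two group norms at the origin, each with Euclidean norm at most $1$. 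It therefore suffices to exhibit a single triple $(s,c,d)$ that meets all three constraint families simultaneously across the block.

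I would first absorb as much of the correlation $\gamma_k^j:=\by_k(\bx_j)^T$ as possible into the $\ell_1$ subgradient by choosing $s_k^j$ entrywise to minimise the residual $r_k^j:=\gamma_k^j-\lambda_1 s_k^j$. That optimum is exactly soft-thresholding: $s_k^j=\gamma_k^j/\lambda_1$ when $|\gamma_k^j|\leq\lambda_1$ (so $r_k^j=0$), and $s_k^j=\mathrm{sign}(\gamma_k^j)$ otherwise (so $|r_k^j|=|\gamma_k^j|-\lambda_1$). This matches the formula declared in the proposition and collapses the problem to finding $(c,d)$ with $r_k^j=\lambda_2 c_k^j+\lambda_3 d_k^j$, subject to the row-wise constraint $\sum_{j\in\bg_m}(c_k^j)^2\leq 1$ for every $k\in\bh_o$ and the column-wise constraint $\sum_{k\in\bh_o}(d_k^j)^2\leq 1$ for every $j\in\bg_m$.

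The crux is this last decomposition, because the two constraint families act on \emph{orthogonal} slices of the block (rows vs.\ columns), so a naive pointwise split of $r_k^j$ between $\lambda_2 c_k^j$ and $\lambda_3 d_k^j$ does not automatically satisfy both. My plan is to use an ansatz of the form $c_k^j=\alpha_k^j/\sqrt{|\bg_m|}$ and $d_k^j=\beta_k^j/\sqrt{|\bh_o|}$ with $|\alpha_k^j|,|\beta_k^j|\leq 1$, under which both groupwise $\ell_2$ bounds are automatic, and then align the signs of $\alpha,\beta$ so that $\lambda_2 c_k^j$ and $\lambda_3 d_k^j$ reproduce $r_k^j$ entrywise. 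Summing the KKT equation over the block and invoking the triangle inequality against the two normalisations $\sqrt{|\bh_o|}$ and $\sqrt{|\bg_m|}$ produces an aggregate budget of the form $|\lambda_2\sqrt{|\bh_o|}-\lambda_3\sqrt{|\bg_m|}|$ for $\sum_{k,j}|r_k^j|$, recovering the right-hand side of the stated rule. The main obstacle I anticipate is certifying that this aggregate budget can actually be realised by a concrete entrywise assignment of $(\alpha,\beta)$, rather than only as a summed upper bound; any slack here is precisely what makes the proposition a sufficient (possibly conservative) screening condition rather than a tight characterisation.
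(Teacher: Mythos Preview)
Your setup, the reduction via soft-thresholding of $s_k^j$, and the recognition that the crux is a row/column subgradient decomposition all match the paper exactly. Where you diverge is in the aggregation step. You try to realise the KKT residuals entrywise through the normalised ansatz $c_k^j=\alpha_k^j/\sqrt{|\bg_m|}$, $d_k^j=\beta_k^j/\sqrt{|\bh_o|}$ and then sum absolute values; the paper instead \emph{squares} the KKT identities and sums over the block, obtaining
\[
\sum_{k\in\bh_o}\sum_{j\in\bg_m}\bigl(\by_k(\bx_j)^T-\lambda_1 s_k^j\bigr)^2
=\sum_{k,j}\bigl(\lambda_2 c_k^j+\lambda_3 d_k^j\bigr)^2 .
\]
It then bounds the right-hand side using $\sum_{k,j}(c_k^j)^2\leq|\bh_o|$, $\sum_{k,j}(d_k^j)^2\leq|\bg_m|$, and Cauchy--Schwarz on the cross term $\sum_{k,j}c_k^j d_k^j$, whose lower bound $-\sqrt{|\bh_o|\,|\bg_m|}$ is what produces the \emph{minus} sign in $(\lambda_2\sqrt{|\bh_o|}-\lambda_3\sqrt{|\bg_m|})^2$. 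The stated $\ell_1$ hypothesis then enters only because $\sum_{k,j}r_{k}^{j\,2}\le\bigl(\sum_{k,j}|r_k^j|\bigr)^2$, so the $\ell_1$ condition in the proposition implies the $\ell_2$ condition the proof actually manipulates.

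This difference matters for the obstacle you flagged. Your ansatz, with $|\alpha_k^j|,|\beta_k^j|\le 1$, gives each entry a reach of $\lambda_2/\sqrt{|\bg_m|}+\lambda_3/\sqrt{|\bh_o|}$, and to manufacture the \emph{difference} $|\lambda_2\sqrt{|\bh_o|}-\lambda_3\sqrt{|\bg_m|}|$ you would have to force $\alpha$ and $\beta$ to oppose each other, which then prevents you from matching the sign pattern of $r_k^j$ entrywise. That is exactly the gap you anticipated, and it is real: the $\ell_1$ route with this ansatz does not close. The paper's squared aggregation sidesteps the entrywise construction entirely---it never exhibits concrete $(c,d)$, it only bounds the quadratic form---and the minus sign falls out of the cross term rather than from any sign alignment. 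If you want to complete your argument along the paper's lines, drop the ansatz and work with $\sum r_{k}^{j\,2}$ directly.
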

\begin{proof}
From the optimality condition in (\ref{equ:kkt}), ${\bB}_{\bh_o}^{\bg_m} = \bm{0}$ if
\begin{align}
\sum_{k \in \bh_o } \sum_{j \in \bg_m } 
\left\{{\by}_k (\mathbf{x}_j)^T - \lambda_1 s_k^j    \right\}^2 
&= \sum_{k \in \bh_o } \sum_{j \in \bg_m }
\left\{\lambda_2 (c_k^j)+ \lambda_3  (d_k^j)\right\}^2 \nonumber\\
&\leq \lambda_2^2 |\bh_o| + \lambda_3^2 |\bg_m|
+ 2\lambda_2 \lambda_3 \sum_{k \in \bh_o } \sum_{j \in \bg_m } (c_k^j) (d_k^j)  \nonumber 	\\
&\leq  \left(\lambda_2 \sqrt{|\bh_o|} - \lambda_3 \sqrt{|\bg_m|}\right)^2.
\nonumber
\end{align}
Here we used the fact that $\sum_{j\in \bg_m}(c_k^j)^2 \leq 1$, $\sum_{k\in \bh_o}(d_k^j)^2 \leq 1$
and $\left| \sum_{k \in \bh_o } \sum_{j \in \bg_m } (c_k^j) (d_k^j)\right|^2
\leq \sum_{k \in \bh_o } \sum_{j \in \bg_m }  (c_k^j)^2 \sum_{k \in \bh_o } \sum_{j \in \bg_o }  (d_k^j)^2 \leq |\bg_m||\bh_o|$ by Cauchy-Schwarz inequality. 
The above inequality holds since 
$-\sqrt{|\bg_m||\bh_o|} \leq \sum_{k \in \bh_o } \sum_{j \in \bg_m } (c_k^j) (d_k^j)$. 
Also, $s_k^j \in [-1, 1]$ is determined to minimize the left-hand side of the inequality,
which is equivalent to applying soft-thresholding to ${\by}_k (\mathbf{x}_j)^T$.
\qed
\end{proof}

Note that Proposition \ref{equ:prop1} 
becomes the condition to identify a zero group for overlapping group lasso
when $\lambda_3 = 0$ and $K = 1$ (Lemma 2 in \cite{yuanefficient}).  
Based on Proposition \ref{equ:prop1}, we further propose a rule for identifying
$\bB_{\bH}^{\bG} = \bf{0}, \{\bg_m\} \in \bG, \{\bh_o\} \in \bH$ as follows: 
\begin{align}
\label{eq:bscreen}
{\bB}_{\bH}^{\bG} = \bm{0} \mbox{ if }
\sum_{k \in \bh_o, \bh_o \in \bH} \sum_{j \in \bg_m, \bg_m \in \bG }  
\left| {\by}_k (\mathbf{x}_j)^T  - \lambda_1 s_k^j \right|
\leq 
\sum_{\bh_o \in \bH} \sum_{\bg_m \in \bG }
\left|\lambda_2 \sqrt{|\bh_o|} - \lambda_3 \sqrt{|\bg_m|}\right|.
\end{align}

This rule does not guarantee that optimality conditions hold for 
${\bB}_{\bh_o}^{\bg_m} = \mathbf{0}$ for all $\bh_o \in \bH$ and $\bg_m \in \bG$.
However, in all of our experiments, 
we observed no violations when 
$|\bG|=|\bH|=2$, and it was very efficient
to identify a large number of zero groups simultaneously.
Here we give some motivation for this rule. 
Let us denote 
$\sum_{k \in \bh_o} \sum_{j \in \bg_m } \left| {\by}_k (\mathbf{x}_j)^T  - \lambda_1 s_k^j \right|$ by 
$L_{om}$, and 
$\left|\lambda_2 \sqrt{|\bh_o|} - \lambda_3 \sqrt{|\bg_m|}\right|$ by $R_{om}$.
If $L_{om} \leq R_{om}$ for all $(o,m)$, this rule is satisfied, and
it correctly discards groups in ${\bB}_{\bH}^{\bG}$.
Now we claim that if $L_{om} > R_{om}$ for some $(o,m)$ 
(there exist some nonzero blocks, i.e., $\bB_{\bh_o}^{\bg_m} \neq \bm{0}$),
$\Pr(\sum_{o,m} L_{om} \leq \sum_{o,m} R_{om})$ is small, and
we are unlikely to discard nonzero blocks.
Suppose $L_{om} \sim \mathcal{N}(\gamma,\sigma)$ if ${\bB}_{\bh_o}^{\bg_m} = \bm{0}$,
and $L_{om} \sim \mathcal{N}(\tau,\sigma)$ if ${\bB}_{\bh_o}^{\bg_m} \neq \bm{0}$,
where $\sigma$ is a constant, and $0< \gamma < R_{om} \leq (1+S/Q) R_{om} << \tau$.
Here $S$  and $Q$ are the number of zero and nonzero blocks in $\bB_{\bH}^{\bG}$, respectively, 
and thus $S+Q = |\bH||\bG|$.
Then,  by Hoeffding's inequality, $\Pr(\sum_{o,m} L_{om} \leq \sum_{o,m} R_{om}) \leq 
\exp\left\{-2 \left({\mathbb E}(\sum_{o,m} L_{om}) - \sum_{o,m} R_{om}\right)^2/C 
\right\}$, where $C$ is a constant. 
We can see that if $\bB_{\bH}^{\bG} \neq \bf{0}$, 
$\Pr(\sum_{o,m} L_{om} \leq \sum_{o,m} R_{om})$ is likely to be small since 
${\mathbb E}(\sum_{o,m} L_{om}) = \tau Q + \gamma S  >> (1+S/Q)\sup \{R_{om}\} Q + \gamma S >  \sum_{o,m} R_{om}$.
Therefore, the rule in (\ref{eq:bscreen}) would work well when $S/Q$ is small since 
the assumption for $\tau$ can be weak.
However, it should be noted that if $|\bG|+|\bH|$ is large, 
$S/Q$ can be very large ($S>>Q$), and the assumption for $\tau$ becomes too strong.
As a result, this rule may be violated if we test very large blocks.

From computational perspective, the rule in 
(\ref{eq:bscreen}) significantly decreases 
the number of iterations for identifying zero groups
as we can test a block of coefficients consisting of multiple
input groups and multiple output groups.
Note that each test can be performed very efficiently  
by summation of elements in a pre-computed matrix, and
the speed for each test can potentially be further improved by GPU \cite{bolz2003sparse}.

\section{Experiments}
In this section, we show the efficiency and accuracy of our proposed method using simulated datasets,
and present its usefulness for eQTL mapping, an important application in bioinformatics. 
We also present comparison between our optimization method and three other competitors
including Fast overlapping Group Lasso (FoGLasso) \cite{yuanefficient}, 
Smoothing Proximal Gradient method (SPG) \cite{chen2010efficient}, 
and Structured Lasso algorithm (SLasso) \cite{jenatton2009structured}.
Note that FoGLasso is a state-of-the-art method for overlapping group lasso, and 
Yuan et al. showed that FoGLasso is significantly faster than other alternative methods \cite{yuanefficient}.

We designed our experiments as follows. 
In section \ref{subsec:screening}, we first
present the efficiency of screening step in our method for a wide range of 
tuning parameters. 
Then, 
we present the speed and accuracy of our method
under various settings  in comparison to other methods.
Finally, we confirm the usefulness of our method by showing 
an interesting interaction effect between a pair of genetic variants 
in yeast that we identified using our method.

\subsection{Evaluation of Efficiency of Our Method Via Simulation Study}
\label{subsec:screening}
To systematically evaluate the efficiency of our method, 
we generated simulated datasets as follows. 
For generating $\bX \in \mathbb{R}^{J \times N}$, 
we first selected $J$ input covariates from a uniform distribution over $[0,1]$ for $N$ samples.
Then we defined input and output groups as follows. 
For input groups, we selected the size of input groups 
from a uniform distribution over $[5,10]$, denoted by $\mathcal{U}(5,10)$, and
the size of overlapping inputs between two  consecutive groups was selected from 
$\mathcal{U}(1,4)$. For output groups, 
the size of output groups was selected from $\mathcal{U}(3,5)$, and 
the size of overlap with the previous output group was drawn from $\mathcal{U}(1,2)$.
We then simulated $\bB \in \mathbb{R}^{K \times J}$, i.e, the ground-truth coefficients, 
which includes 52 nonzero coefficients ($\beta_k^j = 3$).
Given $\bX$ and 
$\bB$, we generated $K$ outputs 
by $\bY=\bB \bX + \bE,$ 
$\bE \sim \mathcal{N}(\bm{0},\mathbf{I})$.
We generated 10 different datasets for each simulation setting 
with $N$, $K$ and $J$, and 
report the average CPU time and average accuracy
using F1 score, which is 
harmonic mean of precision and recall rates.
Given an estimated $\bB$, precision is defined by the ratio of the
number of correctly found nonzero coefficients to 
the total number of estimated nonzero coefficients, and 
recall is denoted by the number of correctly found nonzero coefficients 
divided by the total number of true nonzero coefficients.
Throughout all the experiments, we employed a two-level tree (excluding the dummy root node), 
where the nodes at the first level contain a block of coefficients consisting of
two input groups and two output groups, and the leaf nodes include a block of 
cofficients with one input group and one output group.

\paragraph{Evaluation of Efficiency of Screening Step Via Simulation Study}
We first evaluate the efficiency of screening step (the first step in  
Algorithm \ref{alg:hGroupThres})
for a range of tuning parameters 
$\{0.001,0.002,0.005,0.01,0.02,0.05,0.1,0.2,0.5\}$
using simulation datasets with $N=1000$, $J=5000$ and $K=5$, and
Table \ref{tab:screen} shows the results.
For simplicity, we set $\lambda_1=\lambda_2=\lambda_3$ denoted by $\lambda$.
From the table, we can observe that screening time 
drops significantly as $\lambda$ changes from $0.05$ to
$0.02$, which indicates that many coefficients were discarded in the first level 
of our hierarchical tree. Indeed, the number of selected groups was decreased from 17071 to 116
without missing true nonzero coefficients.
It should be noted that the updating time (the second step in Algorithm \ref{alg:hGroupThres}) 
was also substantially
reduced when $\lambda$ is changed from $0.02$ to $0.05$ due to the small number of 
groups selected by screening step. 
Thus, our algorithm became very efficient from  $\lambda = 0.05$ since
both screening and updating step were very fast.
For large tuning parameters (e.g. $\lambda \geq 0.2$),
we started to miss true coefficients, and when $\lambda = 0.5$, 
all coefficients were set to zero due to heavy penalization. 
We can observe that $\lambda= 0.05$ or $0.1$ are appropriate for
our simulation datasets, 
and in the following experiments, we will use these two tuning parameters.

\begin{table}
\caption{Efficiency of our screening step for a range of 
tuning parameters. For comparison, CPU time for updating step (the second step in  
Algorithm \ref{alg:hGroupThres}) is also presented. 
The fourth column denotes the number of groups selected by our screening step
(total number of groups: 19152), and 
the last column represents the number of true nonzero coefficient 
discarded by our screening step. }
{\footnotesize
\begin{center}
    \begin{tabular}{ | l | l | l | l |  l | p{5cm} |}
    \hline
    $\lambda_1=\lambda_2=\lambda_3$ & Screening Time (s) & Updating Time (s) 
    & \# Selected Groups& \# Missing $\beta_k^j\neq 0$  \\ \hline
    0.001 & 0.465  & 13.246  & 19152   & 0\\ \hline
    0.002 & 0.482  & 13.497  & 19152  & 0\\ \hline
		0.005 & 0.470  & 12.515  & 19151  & 0\\ \hline    
		0.01  & 0.481  & 9.343   & 19124  & 0\\ \hline
		0.02  & 0.476  & 6.018   & 17071  & 0\\ \hline
		0.05  & 0.246  & 0.022   & 116   & 0\\ \hline
		0.1   & 0.255  & 0.010   & 51   & 0\\ \hline
		0.2   & 0.249  & 0.003   & 16  & 20\\ \hline
		0.5   & 0.239  & 0       & 0  & 52\\ 
    \hline
    \end{tabular}
\end{center}
}
\label{tab:screen}
\end{table}

\paragraph{Evaluation of Speed and Induced Structured Sparsity Via Simulation Study} 

\begin{figure}[t] 
\centering
\subfigure[]{\includegraphics[width=0.4\textwidth]{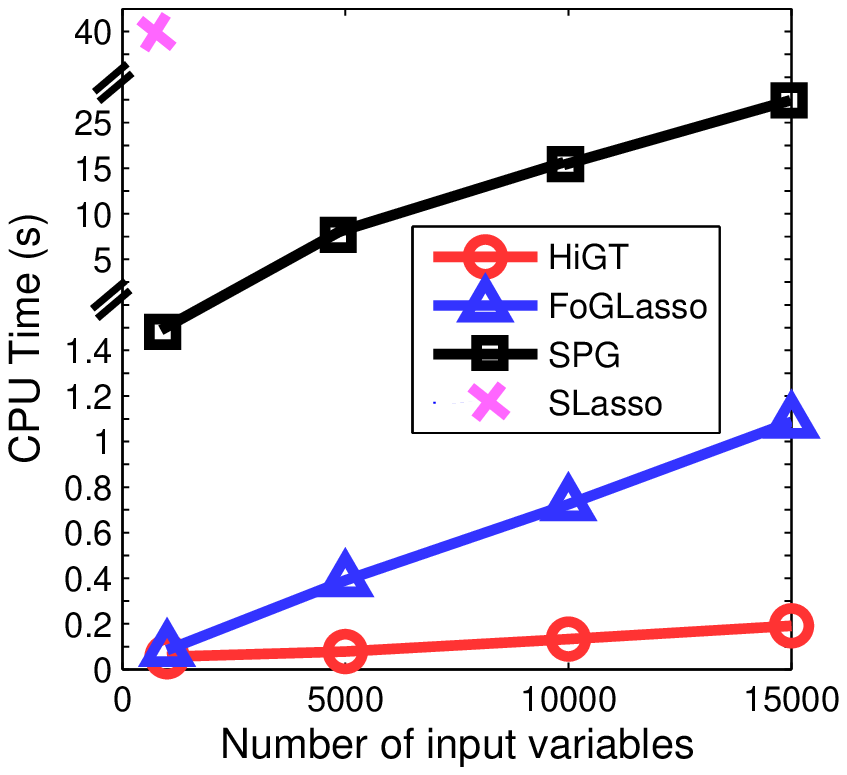}}
\subfigure[]{\includegraphics[width=0.4\textwidth]{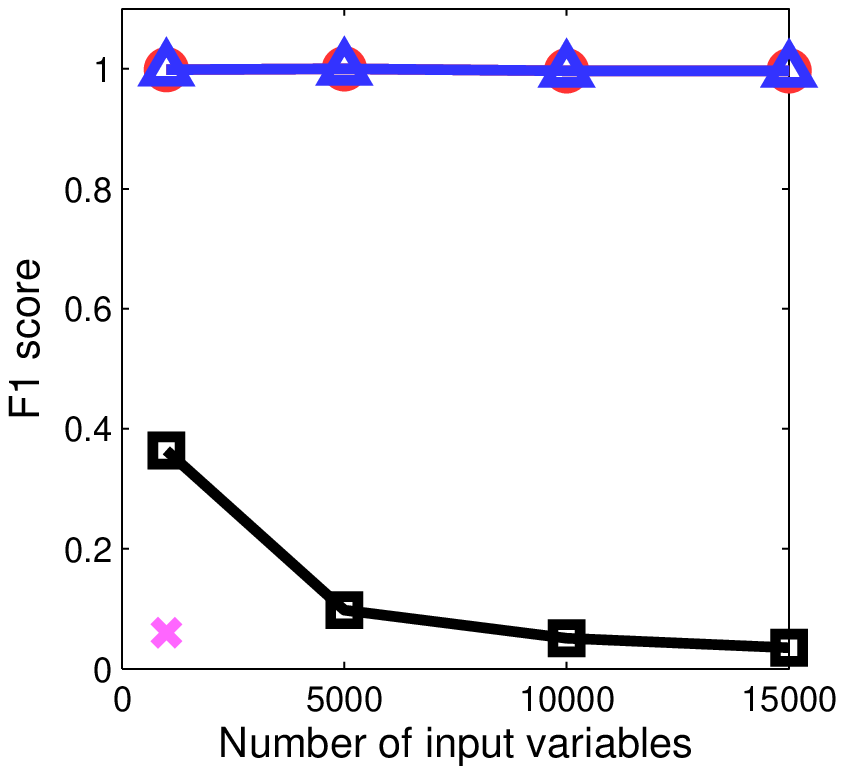}}
\caption{(a) CPU time and (b) F1 score comparison of our proposed HiGT method, FoGLasso, 
SPG, and SLasso with different number of input variables under 
a single task regression setting. 
We used simulation datasets with $N=1000$, $K=1$, $\lambda_1=\lambda_2 = 0.05$, and  
$\lambda_3 = 0$. }
\label{fig:speed_fullcomp}
\end{figure}

We compared the speed and the accuracy 	of our HiGT method with the three alternatives
of FoGLasso, SPG and SLasso. 
We first show the results under a single task regression setting where $\lambda_3 = 0$, and $K=1$
(this setting was used in previous papers for FoGLasso, SPG and SLasso).
Figure \ref{fig:speed_fullcomp}(a,b) show CPU time and F1 score of the four methods 
with different number of input variables from $1000$ to $20000$, fixing $N=1000$, $K=1,
\lambda_1 = \lambda_2= 0.05$, and $\lambda_3 = 0$. 
We observed that our method was much more scalable than other methods, 
and perfectly recovered true nonzero coefficients.
FoGLasso achieved the same accuracy but it was not as fast as HiGT
due to the lack of hierarchical group screening step.
In the following comparison analysis, we included only 
FoGLasso and SPG which showed good performance.

\begin{figure}[htp]
\centering
\subfigure[]{\includegraphics[width=0.23\textwidth]{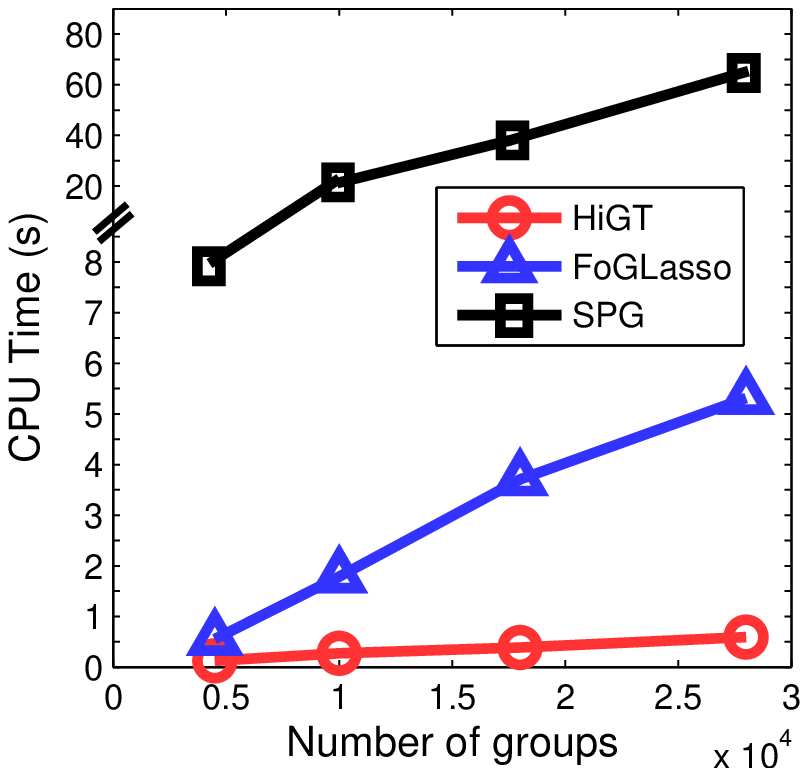}}
\subfigure[]{\includegraphics[width=0.23\textwidth]{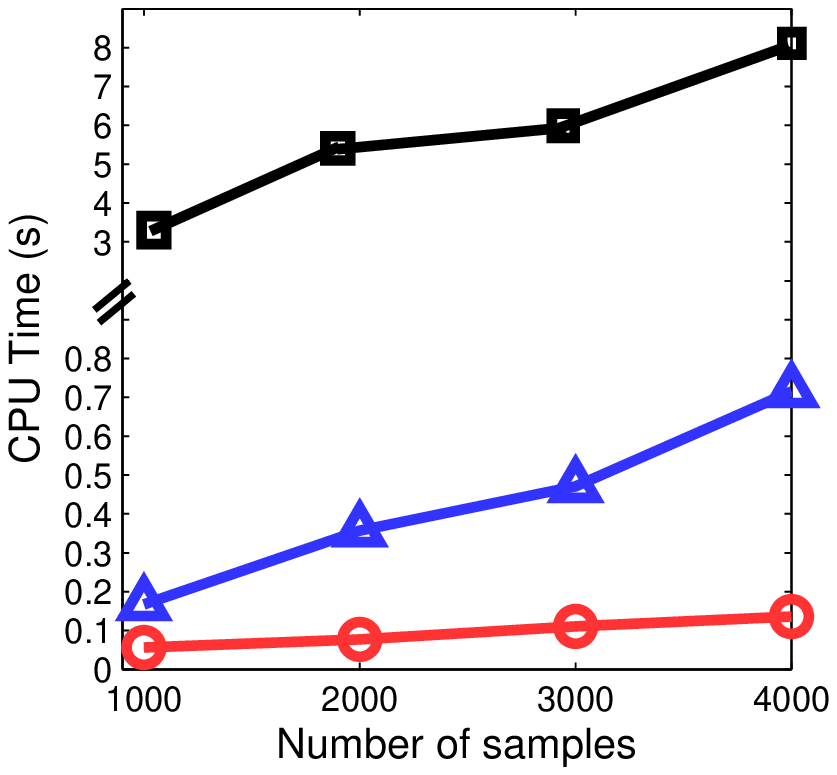}}
\subfigure[]{\includegraphics[width=0.23\textwidth]{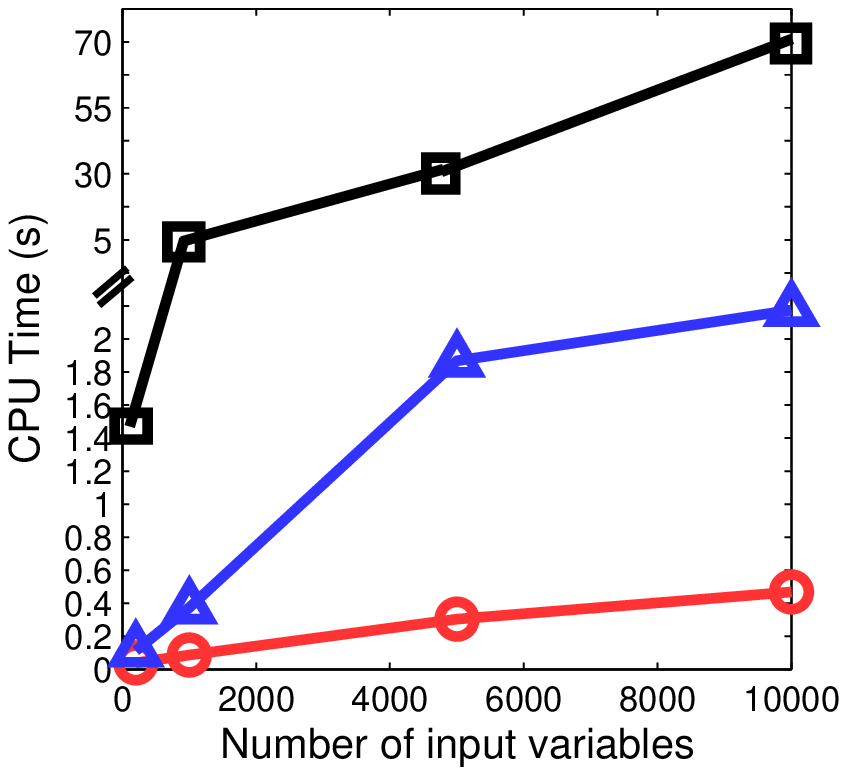}}
\subfigure[]{\includegraphics[width=0.23\textwidth]{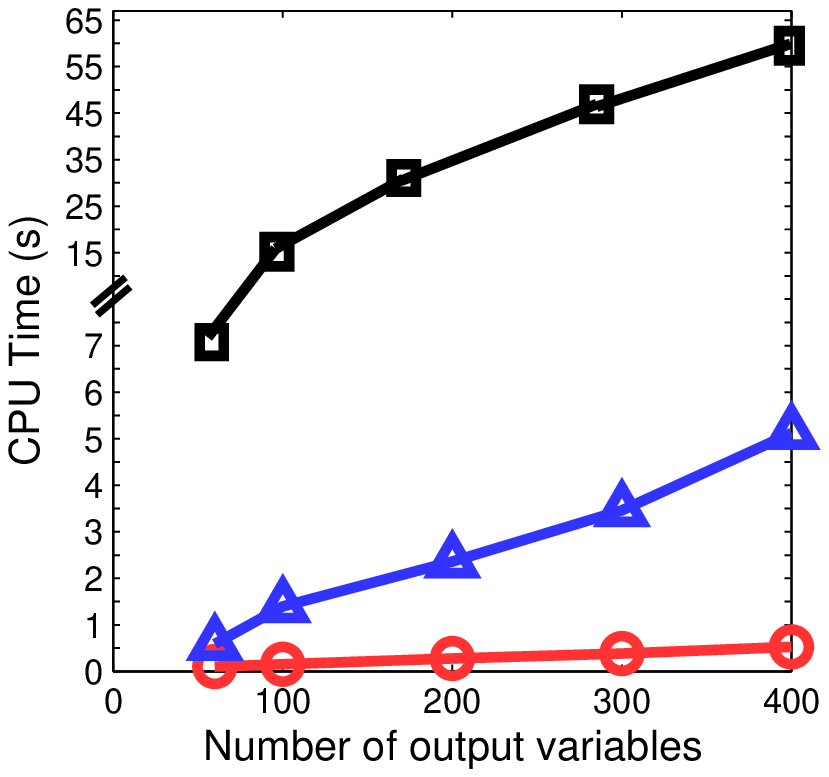}}
\subfigure[]{\includegraphics[width=0.23\textwidth]{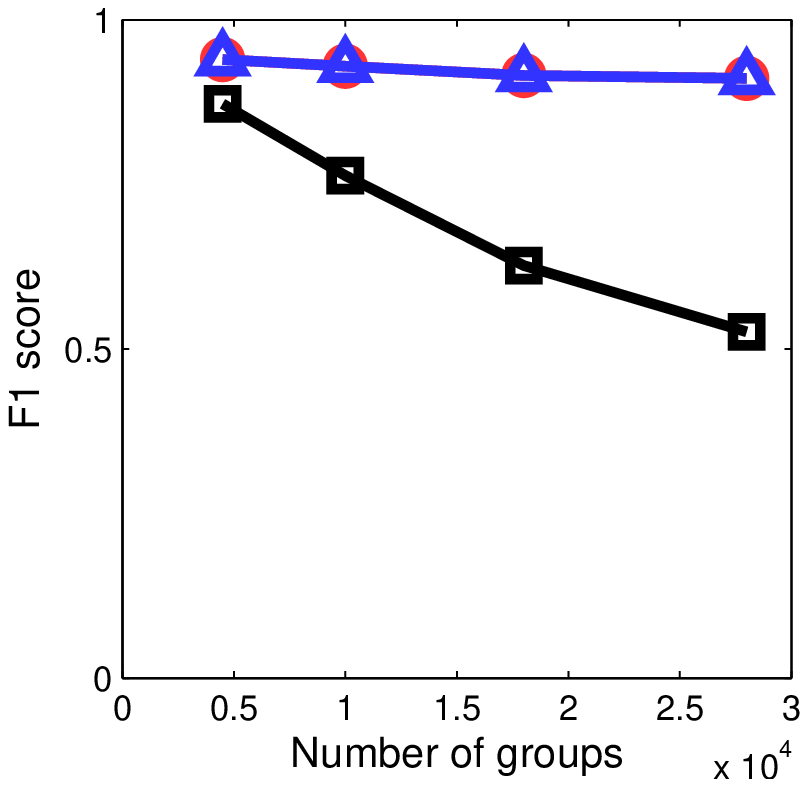}}
\subfigure[]{\includegraphics[width=0.23\textwidth]{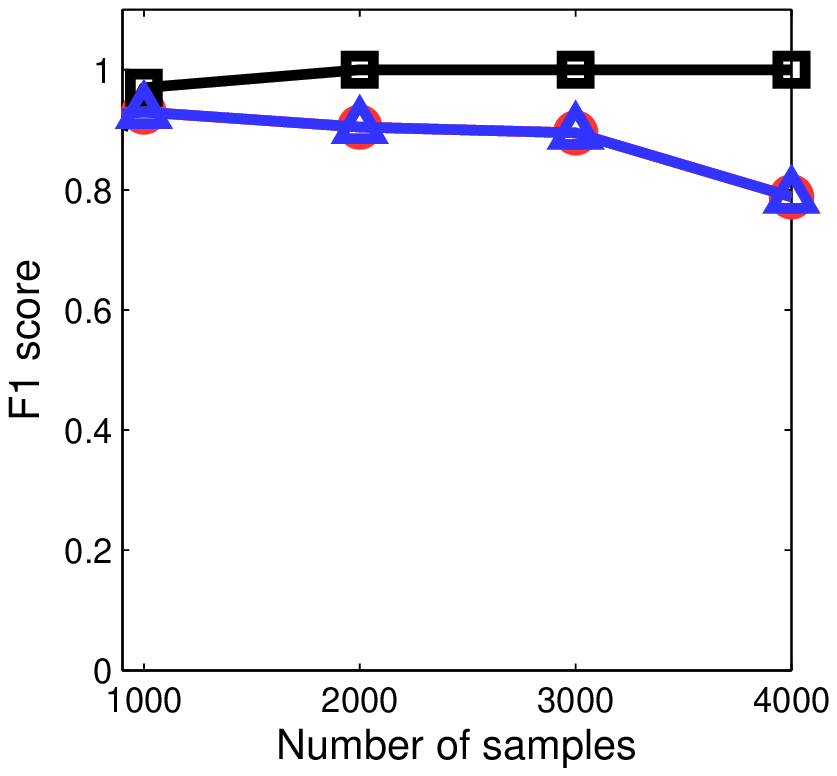}}
\subfigure[]{\includegraphics[width=0.23\textwidth]{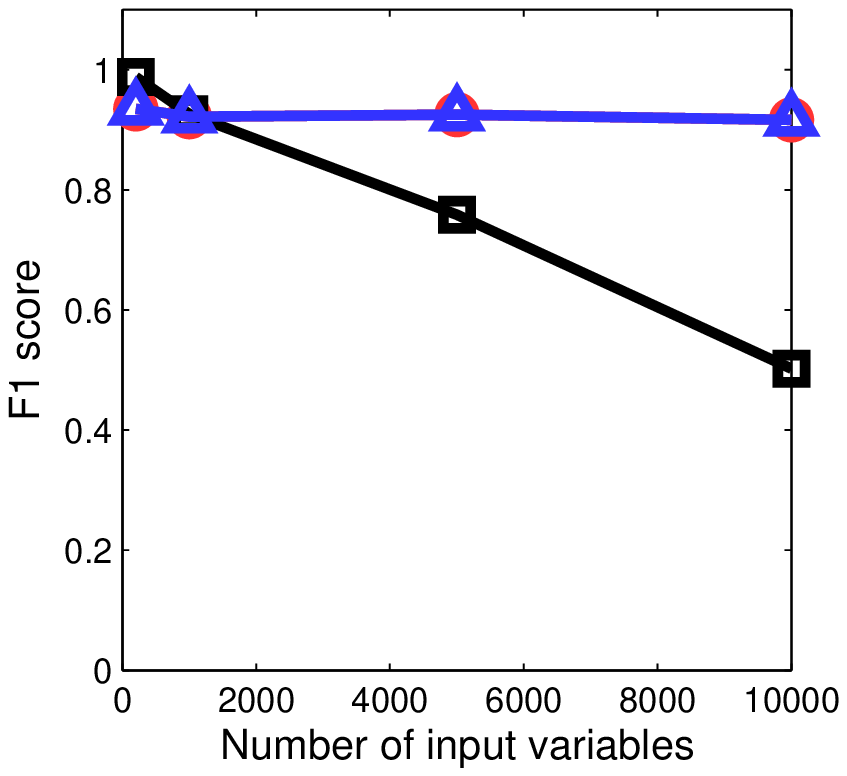}}
\subfigure[]{\includegraphics[width=0.23\textwidth]{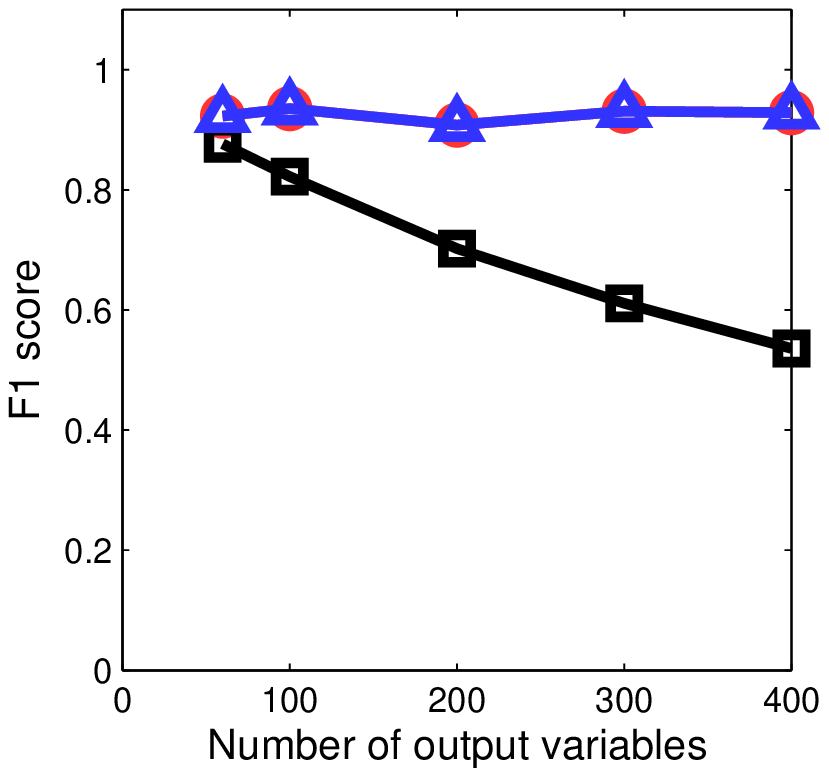}}
\caption{CPU time and  F1 score comparison of our proposed HiGT method, FoGLasso, and SPG with different 
(a,e) number of groups ($N=1000$), 
(b,f) samples ($J=500, K=5$), 
(c,g) input variables ($N=1000, K=5$), and 
(d,h) output variables ($N=1000, J=150$).}
\label{fig:performance}
\end{figure}

Figure \ref{fig:performance} shows efficiency and F1 score of three methods 
including HiGT, FoGLasso, and SPG under various simulation settings.
For all experiments, we set $\lambda_1 = \lambda_2=\lambda_3 = 0.1$.
From this figure, we can observe the following:
\begin{itemize}
	\item For all settings with 
	different number of groups, samples, input and output variables, 
	our algorithm was much more efficient than the other methods. 
	\item Our HiGT algorithm and FoGLasso showed the same F1 score (close to 1) 
	for all simulation settings. 
	\item Screening step in our algorithm never made a mistake for all experiments.
	\item In general, the accuracy of HiGT and FoGLasso did not decrease as the problem size increased.
	\item For all methods, CPU time increased linearly with 
	the number of groups but the slopes were significantly different.
	Our HiGT method has a very small slope due to the efficient screening step.
\end{itemize}

\subsection{Detecting eQTLs Having Interaction Effects in Yeast Genome}
We also solved problem (\ref{equ:reg5}) 
using our HiGT method 
with yeast data \cite{brem2005landscape}  
which contains 1260 unique SNPs 
and observed gene-expression levels of 5637 genes.
To show the usefulness of our method, 
we briefly  report the most significant eQTLs having interaction effects that we identified
(chr1:154328-chr5:350744).
According to our estimation, 
it turns out this pair of genetic variants affected 455 genes enriched with 
the GO category of ribosome biogenesis with corrected p-value $< 10^{-35}$.
This SNP pair was very closely located on gene NUP60 and gene RAD51, respectively, and
we found that 
there exists a significant genetic interaction between the two genes
\cite{costanzo2010genetic}.
As both SNPs are closely located to NUP60 and RAD51 (within 500bp), 
we can assume that the two SNPs 
affected the two genes (NUP60 and RAD51),
and their genetic interaction in turn acted on a large number of
genes related to ribosome biogenesis. It implies that 
this pair of SNPs can be a truly meaningful biological finding.
We consider that our detection of this SNP pair is novel as
the exact locations of the SNP pair were not reported in both
Storey et al. \cite{storey2005multiple} and 
a statistical test for pairwise interactions \cite{purcell2007plink}.

\section{Discussions}
\label{sec:discussion}
In this paper, we presented an efficient algorithm 
for a large-scale overlapping group lasso problem in highly general settings.
Our method relies on a screening step 
which can efficiently discard a large number of irrelevant groups simultaneously.
Our simulation confirmed that our model is significantly
faster than other competitors while maintaining high accuracy.
In our analysis of yeast eQTL datasets, we reported a pair of genetic variants 
that potentially interact with each other and influence on ribosome biogenesis.

One of promising research directions of this work would be to consider 
parallelization of our method. Note that we can naturally
parallelize the screening step as it considers a set of groups separately. 
However, the second step of our algorithm 
needs to be performed sequentially after the screening step is completed. 
A efficiently parallelized algorithm would not only
further speed up the algorithm but also allow us to 
deal with very large problems which cannot fit into memory.
We are also interested in theoretical analysis of our screening step
in terms of sure screening property for ultra high dimensional problems \cite{fan2008sure}
or the properties of strong rules for discarding covariates \cite{tibshirani2011strong}.
Finally, we plan to apply our efficient algorithm to very large-scale
eQTL mapping problems in bioinformatics for understanding the biological
mechanisms of complex human diseases.

\bibliographystyle{plain}

\begin{thebibliography}{10}

\bibitem{argyriou2011efficient}
A.~Argyriou, C.A. Micchelli, M.~Pontil, L.~Shen, and Y.~Xu.
\newblock Efficient first order methods for linear composite regularizers.
\newblock {\em Arxiv preprint arXiv:1104.1436}, 2011.

\bibitem{bolz2003sparse}
J.~Bolz, I.~Farmer, E.~Grinspun, and P.~Schr{\"o}oder.
\newblock Sparse matrix solvers on the gpu: conjugate gradients and multigrid.
\newblock In {\em ACM Transactions on Graphics (TOG)}, volume~22, pages
  917--924. ACM, 2003.

\bibitem{boyd2011distributed}
S.~Boyd, N.~Parikh, E.~Chu, B.~Peleato, and J.~Eckstein.
\newblock Distributed optimization and statistical learning via the alternating
  direction method of multipliers.
\newblock {\em Foundations and Trends in Machine Learning}, 3:1--124, 2011.

\bibitem{brem2005landscape}
R.B. Brem and L.~Kruglyak.
\newblock {The landscape of genetic complexity across 5,700 gene expression
  traits in yeast}.
\newblock {\em PNAS}, 102(5):1572, 2005.

\bibitem{chen2010efficient}
X.~Chen, Q.~Lin, S.~Kim, and E.P. Xing.
\newblock An efficient proximal-gradient method for single and multi-task
  regression with structured sparsity.
\newblock {\em Annals of Applied Statistics}, 2010.

\bibitem{costanzo2010genetic}
M.~Costanzo et~al.
\newblock {The genetic landscape of a cell}.
\newblock {\em Science}, 327(5964):425, 2010.

\bibitem{fan2008sure}
J.~Fan and J.~Lv.
\newblock Sure independence screening for ultrahigh dimensional feature space.
\newblock {\em Journal of the Royal Statistical Society: Series B (Statistical
  Methodology)}, 70(5):849--911, 2008.

\bibitem{jacob2009group}
L.~Jacob, G.~Obozinski, and J.P. Vert.
\newblock Group lasso with overlap and graph lasso.
\newblock In {\em Proceedings of the 26th Annual International Conference on
  Machine Learning}, pages 433--440. ACM, 2009.

\bibitem{jenatton2009structured}
R.~Jenatton, J.Y. Audibert, and F.~Bach.
\newblock Structured variable selection with sparsity-inducing norms.
\newblock {\em Journal of Machine Learning Research}, 12:2777--2824, 2011.

\bibitem{jenatton2011proximal}
R.~Jenatton, J.~Mairal, G.~Obozinski, and F.~Bach.
\newblock Proximal methods for hierarchical sparse coding.
\newblock {\em Journal of Machine Learning Research}, 12:2297--2334, 2011.

\bibitem{mairal2010network}
J.~Mairal, R.~Jenatton, G.~Obozinski, and F.~Bach.
\newblock Network flow algorithms for structured sparsity.
\newblock {\em Advances in Neural Information Processing Systems}, 2010.

\bibitem{mairal2011convex}
J.~Mairal, R.~Jenatton, G.~Obozinski, and F.~Bach.
\newblock Convex and network flow optimization for structured sparsity.
\newblock {\em Journal of Machine Learning Research}, 12:2681--2720, 2011.

\bibitem{meier2008group}
L.~Meier, S.~Van De~Geer, and P.~B{\"u}hlmann.
\newblock The group lasso for logistic regression.
\newblock {\em Journal of the Royal Statistical Society: Series B (Statistical
  Methodology)}, 70(1):53--71, 2008.

\bibitem{mosci2010primal}
S.~Mosci, S.~Villa, A.~Verri, and L.~Rosasco.
\newblock A primal-dual algorithm for group sparse regularization with
  overlapping groups.
\newblock In {\em Neural Information Processing Systems}, 2010.

\bibitem{negahban2011simultaneous}
S.N. Negahban and M.J. Wainwright.
\newblock Simultaneous support recovery in high dimensions: Benefits and perils
  of block $\ell_1/\ell_{\infty}$-regularization.
\newblock {\em Information Theory, IEEE Transactions on}, 57(6):3841--3863,
  2011.

\bibitem{purcell2007plink}
S.~Purcell, B.~Neale, K.~Todd-Brown, L.~Thomas, M.A.R. Ferreira, D.~Bender,
  J.~Maller, P.~Sklar, P.I.W. de~Bakker, M.J. Daly, et~al.
\newblock {PLINK: a tool set for whole-genome association and population-based
  linkage analyses}.
\newblock {\em The American Journal of Human Genetics}, 81(3):559--575, 2007.

\bibitem{qin2011structured}
Z.~Qin and D.~Goldfarb.
\newblock Structured sparsity via alternating directions methods.
\newblock {\em ArXiv e-prints}, 2011.

\bibitem{storey2005multiple}
J.D. Storey, J.M. Akey, L.~Kruglyak, et~al.
\newblock {Multiple locus linkage analysis of genomewide expression in yeast}.
\newblock {\em PLoS Biology}, 3(8):1380, 2005.

\bibitem{tibshirani2011strong}
R.~Tibshirani, J.~Bien, J.~Friedman, T.~Hastie, N.~Simon, J.~Taylor, and R.J.
  Tibshirani.
\newblock Strong rules for discarding predictors in lasso-type problems.
\newblock {\em Journal of the Royal Statistical Society: Series B (Statistical
  Methodology)}, 2011.

\bibitem{yuanefficient}
L.~Yuan, J.~Liu, and J.~Ye.
\newblock Efficient methods for overlapping group lasso.
\newblock {\em Advances in Neural Information Processing Systems}, 2011.

\end{thebibliography}

\end{document}